\renewcommand\footnotetextcopyrightpermission[1]{} 
  \providecommand\BibTeX{{%
    \normalfont B\kern-0.5em{\scshape i\kern-0.25em b}\kern-0.8em\TeX}}}
\begin{document}

\title{ArchNet: A Data Hiding Design for Distributed Machine Learning Systems}

\author{Kaiyan Chang}
\email{changkaiyan@foxmail.com}

\affiliation{%
  \institution{University of Electronic Science and Technology of China}
  \city{Cheng Du}
  \state{Si Chuan}
  \country{China}}

\author{Wei Jiang}
\authornote{Corresponding author}
\email{weijiang@uestc.edu.cn}
\affiliation{%
  \institution{University of Electronic Science and Technology of China}
  \city{Cheng Du}
  \state{Si Chuan}
  \country{China}
}

\author{Jinyu Zhan}
\affiliation{%
  \institution{University of Electronic Science and Technology of China}
  \city{Cheng Du}
  \state{Si Chuan}
  \country{China}
}

\author{Zicheng Gong}
\affiliation{%
  \institution{University of Electronic Science and Technology of China}
  \city{Cheng Du}
  \state{Si Chuan}
  \country{China}
}

\author{Weijia Pan}
\affiliation{%
  \institution{University of Electronic Science and Technology of China}
  \city{Cheng Du}
  \state{Si Chuan}
  \country{China}
}

\begin{abstract}
  Integrating idle embedded devices into cloud computing is a promising approach to support distributed 
  machine learning. In this paper, we approach to address the data hiding problem in such distributed 
  machine learning systems. For the purpose of the data encryption
  in the distributed machine learning systems, we propose the Tripartite Asymmetric Encryption theorem and give
  mathematical proof. Based on the theorem, we design a general image encryption scheme ArchNet.
  The scheme has been implemented on MNIST, Fashion-MNIST and Cifar-10 datasets to simulate real
  situation. We use different base models on the encrypted datasets and compare the results with the RC4
  algorithm and differential privacy policy. Experiment results evaluated the efficiency of the proposed
  design. Specifically, our design can improve the accuracy on MNIST up to 97.26\% compared with RC4.
  The accuracies on the datasets encrypted by ArchNet are 97.26\%, 84.15\% and 79.80\%, and they are
  97.31\%, 82.31\% and 80.22\% on the original datasets, which shows that the encrypted accuracy of ArchNet
  has the same performance as the base model. It also shows that ArchNet can 
  be deployed on the distributed system with embedded devices.
\end{abstract}


\keywords{Distributed machine learning  Embedded systems  Data Encryption  Neural Networks}


\maketitle

\section{INTRODUCTION}
Cloud computing services have become the de facto standard 
technique for training neural network. Some machine learning service providers (\emph{e.g.}, Google) 
develop TPU or FPGA to accelerate the neural network calculations of cloud servers  \cite{fpga,tpu,JIANG2020101775}. 
They encourage people to upload datasets to the cloud. However, there are two problems with this business 
model. (i) Although it is convenient for users to use cloud machine learning services, each upload can only 
use algorithms implemented by the service provider. (ii) The resources on cloud servers are limited by hardware, 
which can not change with heavy load. 
Despite the first solution, some users do not have the corresponding expertise to design better models for 
self-training. They are not satisfied with the fixed performance of commercial models. They want to try newer 
and more diverse algorithms to enhance the competitiveness of users' own services. For the second solution, 
compute resources lack flexibility. Therefore, we proposed a distributed machine learning system with 
embedded devices. The cloud service provider only serves as an intermediary between the data publisher and 
the algorithm owner on embedded device. 

Distributed machine learning system has severe security issues. It may leak sensitive personal 
information(\emph{e.g.} bank card password, identity number). Patented algorithms may also be leaked
in the process. Untrusted nodes in the system can steal both data and model 
through the exchanged information. Some recent studies show that the gradients sharing in such system
 may leak important information\cite{deepleak}.
 The Fully Homomorphic Encryption (FHE) algorithm\cite{ChouFaster} proposed in recent years can encrypt data.
 However, the encrypted data rely on a special neural network structure to be correctly identified. 
 Existing methods do not combine the accuracy and the data protection well, and embedded devices are 
 not well integrated with cloud servers.

In this paper, we first introduce a novel and effective distributed machine learning system with embedded 
devices (\emph{i.e.}, moving the computing end of machine learning from the central server to the remote end 
on embedded devices). We further propose a dataset encryption scheme (ArchNet), which can solve the problem 
of untrustworthy in embedded devices of the distributed machine learning computing system. We deduce the 
basic principle of Tripartite Asymmetric Encryption mathematically. We prove that neural network can be 
used for encryption and decryption. We find that using neural network as encryptor can make the encrypted 
dataset difficult to be stolen by others and easy to learn at the remote end. The reasons are that some 
basic unit combinations of neural network have reversible operations, and it is difficult for human to 
recognize data in high-dimensional space.
Our primary contributions in this paper are as follows.
\begin{itemize}
  \item We identify the data hiding problem in distributed machine learning systems with embedded devices.
  \item We propose Tripartite Asymmetric Encryption and two kinds of key, which can be used on embedded devices.
  \item We prove the rationality of neural network in data encryption, which enhances the theorem base of the encrypted neural network.
  \item We design and implement the data encryption scheme ArchNet to address the data hiding problems in distributed machine learning systems, which significantly outperforms than the existing approach and has the same performance as base model.
\end{itemize}

The rest of the paper is organized as follows. Section 2 provides the principle of distributed machine learning on embedded device. Section 3 summarizes the necessary theoretical basis of data hiding, and describes the basic principle of Tripartite Asymmetric Encryption. Section 4 provides the ArchNet scheme to implement a data encryption scheme. We present our experimental results in Section 5. Section 6 summarizes the related work and Section 7 concludes the paper.
\section{DISTRIBUTED MACHINE LEARNING PRELIMINARIES}\label{hhhh}
In this section, we first describe the structure of distributed machine learning system. Finally, we demonstrate why we have to solve the model hiding problem and data hiding problem in the system.
\subsection{Distributed Machine Learning System with embedded Devices}
The computing resources of the cloud servers are limited by hardware and the fixed algorithms of service provider. 
In order to solve this problem, we propose distributed machine learning system which is different from the distributed 
machine learning on heterogeneous computing system (\emph{i.e}, a computer with GPU, TPU or FPGA). It is a change of the business pattern in cloud ML. Computer network is the basis of our system.
\begin{figure}[!h]
  \centering
  \includegraphics[width=\linewidth]{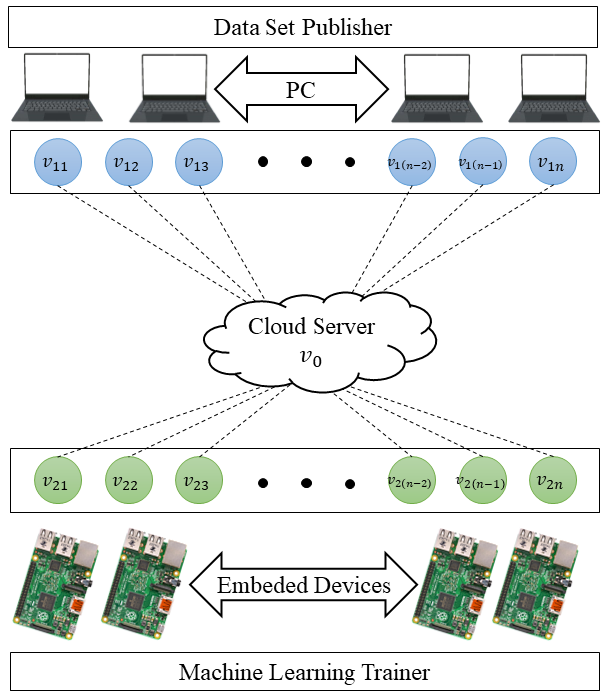}
  \caption{Distributed Machine Learning System Structure}
  \label{distrifig}
\end{figure}
We define the computer network structure as $G=\left \langle V_c,E_c \right \rangle $. Here $V_c$ denotes the union of 3 different sets as shown in Fig. \ref{distrifig} machine learning server node $v_0$, dataset publishing node sets$\left\{v_{11},v_{12},\cdots,v_{1n}\right\}$, and computer node (\emph{i.e.} embedded device) sets with computing resources $\left\{v_{21},v_{22},\cdots,v_{2m}\right\}$. The machine learning server node ${v_0}$  receives datasets from $v_{1i},1 \le i \le n$ via the internet, and then post the task information. 
The computer that meets the task requirements $v_{2j},1\le j\le m$ can make a request to the machine learning server $v_0$ automatically, which send the dataset to the computer $v_{2j}$ through the internet. $v_{2j}$ provides deep learning algorithm to train the dataset. When the training is finished, the embedded device sends the deep learning model back to the machine learning server, and the server pays a certain amount of fees to the computer after using the validation dataset. Finally, the server sends the deep learning model back to the dataset publisher node.

In this system, machine learning server does not need to have computing resources. It behaves as an intermediary of data and algorithm deployment. $v_{1i}$ is the publisher of dataset and the user of trained deep learning model. $v_{2j}$ is both a publisher of the deep learning model and a user of the dataset. (\emph{e.g.}, $v_{2j}$ can be a university or a scientific research institute, which wants to use the dataset provided by $v_{1i}$ to validate their new deep learning algorithms.)

Compared with the cloud ML system, our system has no resource restrictions on the server. It can help the dataset publishers find specified algorithms. The system makes algorithm designers more active in validating their algorithms.
\subsection{Model Hiding}
For computers $v_{2j},1\le j\le m$, their users don't want others to know their private training algorithms. $v_{2j}$ serializes the 
model into a universal format (\emph{e.g.}, ONNX) which connects the deep learning model publisher and the dataset publisher. Because 
the model in universal format does not contain the optimization method and data augmentation algorithm, it can hide the algorithm of the 
deep learning model. The $v_{2j}$ node knows how to train the model, while the $v_{1i}$ node does not know the specific training algorithm. (\emph{e.g.}, suppose a scientific research group decides to test their new algorithm for image classification on the computer $v_{21}$, they submit a dataset request to the machine learning service provider $v_0$, then use their new algorithm training the dataset and generate ONNX file. Ultimately, they send ONNX file back to the server $v_0$. The server validates the deep learning model after they complete the tasks. The server do not know the specific algorithm of training.) The private training algorithm can be protected well in our system. Only $v_{2j}$ knows the crucial training method.
\subsection{Data Hiding}
For publishers with datasets $v_{1i},1\le i\le n$, their users do not want others to know the contents of their datasets for the purpose of protecting privacy and sensitive information. (\emph{e.g.}, the dataset publisher $v_{1i}$ publishes the dataset $S$. Although $v_0$ and $v_{2j}$ can get the dataset $S$, they cannot know the meaning represented by $S$. They can use the dataset $S$ to train the deep learning model.) In order to hide data, we design an encryption algorithm. This general encryption algorithm like AES and RC4 has been applied to data on the internet. However, the use of such encryption algorithm for data hiding will cause neural network training problems. This algorithm disrupts the original distribution of the dataset, resulting in the accuracy of neural network is very low. Therefore, some researchers propose FHE. However, the current FHE does not have a universality in deep learning algorithms. Only specific models can be used as base model (\emph{e.g.}, CryptoNet). For the rest of the paper, we focus on the training accuracy of data encryption and propose our encrypt method (ArchNet) to improve the accuracy of neural network on encrypted dataset.
\section{PROBLEM FORMULATION AND THE SOLUTION}
In this section, we give the mathematical proof of the tripartite asymmetric encryption. We finally propose standards to evaluate the performance of an encryption scheme.
\subsection{Data Hiding Problem}
An optimization problem usually consists of three different components: a vector of parameters $x$, an objective function $F(x)$, and a set of constraint functions $C_i(x)$. The goal is to find a concrete value of the parameter vector $x$ that maximize $F(x)$ while satisfying all constraint functions $C_i (x)$ as shown below.
\begin{displaymath}
  \min F(x)
\end{displaymath}
\begin{equation}
  \begin{split}
    \mbox{s.t.}\quad
    C_i(x)\le 0, i\in N  \\
    C_i(x)=0, i\in Q
  \end{split}
\end{equation}
Here $x\in R^n,R,N,Q$ denote the sets of real numbers, the  indices for inequality constraints, and the indices for equality constraints, respectively.

Usually, we use iterative method with a small learning rate to maximize $F(x)$. We must make sure the result meet the constraints after each iteration. However, most constraint functions can not be expressed analytically in practical applications. Evolutionary algorithm is generally used to solve the optimization problem with no gradient information, while back propagation in neural network is used to solve the optimization with obvious gradient information. Both of them can only solve unconstrained optimization problems. It is difficult to simplify constrained optimization problem to unconstrained optimization problem based on specific problems. Data hiding is a constrained optimization problem. The goal of the problem is to maximize the distance between the original data distribution function $f(x)$ and the encrypted data distribution function $g(x)$. The constraint of the problem is that the encrypted data $g(x)$ can still be recognized with high accuracy by deep learning algorithms. The constraint is the premise of ensuring the flexibility of algorithm in distributed machine learning system. The rest of the paper focuses on the simplification of data hiding problem so that it can be solved by neural network.

\subsection{Tripartite Asymmetric Encryption}
In order to transform the data hiding problem into unconstrained optimization problem, we eliminate the constraint function $C_i(x)$. Assume the deep learner as a set of functions. The first problem to solve is what kind of function $h$ can act on the dataset $S$ to make the deep learner still get correct classification for $h(S)$. The function $h$ with this property can be used as encryption function. The following definitions define the above problems mathematically.
\begin{definition}
  Given the countable infinite function set as:
  \begin{displaymath}
    F=\left\{f_0,f_1, \cdots \right\}, f_i\circ f_j=I
  \end{displaymath}
  where $I$ denotes the unit mapping, $f_i$ is called the first type of encoding function or the first type of encryption function (E1) corresponding to $f_j$, $f_j$ is called the first type of decoding function or the first type of decryption function (D1) corresponding to $f_i$.
\end{definition}
\begin{definition}
  Given the countable infinite function set as:
  \begin{displaymath}
    F=\left \{f_0,f_1,\cdots \right \}, f_i\circ f_j=g
  \end{displaymath}
  here $f_i$ is the second type of encoding function or the second type of encryption function (E2) corresponding to $f_j$, and $f_j$ is the second type of decoding function or the second type of decryption function (D2) corresponding to $f_i$, where the definition domain of $g$ is dataset $S$, the range of $g$ is the label set corresponding to dataset $S$.
\end{definition}
The definition above further describes that the first type of decoding function can decrypt the encrypted data to the original data, while the second decoding function can decrypt the encrypted data to the corresponding label. We intend to find a function with such a special property, which is both the first type of decoding function and the second type of decoding function.
\begin{theorem}\label{theorem1}
  If the function $f$ is the first type of encoding function, and $f$ is also the second type of encoding function, then it has the first type of decoding function $g_1$ and the second type of decoding function $g_2$.
\end{theorem}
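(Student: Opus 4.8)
The plan is to read the two hypotheses through Definitions 1 and 2: saying that $f$ is a first type encoding function means $f$ occupies the encoder slot in a set of the kind described in Definition 1, and likewise for the second type. On this reading the statement is close to a direct unpacking of the definitions, since membership as an E1 function already supplies a partner $g_1$ whose composition with $f$ is the unit mapping, and membership as an E2 function supplies a partner $g_2$ whose composition with $f$ is the labeling map $g$; these partners are, by definition, a D1 and a D2. To make the argument more than a restatement, I would produce $g_1$ and $g_2$ explicitly and check their defining identities, which also shows that the two decoders for the single encoder $f$ are compatible.

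First I would extract injectivity from the E1 hypothesis. Reading the composition in Definition 1 as ``encode first, then decode'', the requirement is $g_1\circ f = I$, i.e. $g_1(f(x)) = x$ for every $x$ in the dataset $S$; such a $g_1$ can exist only if $f$ separates points of $S$, so $f$ is injective. I would then define $g_1$ on the image $f(S)$ by $g_1(f(x)) = x$ and extend it arbitrarily on the rest of the codomain; this $g_1$ is a genuine left inverse of $f$ and hence a valid first type decoding function.

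The second step builds the D2 from the D1. Writing $g\colon S\to L$ for the ground truth labeling, whose domain is $S$ and whose range is the label set $L$ as in Definition 2, I would set $g_2 = g\circ g_1$, which says ``decrypt a ciphertext back to its datum, then read off its label''. Composing with $f$ gives $g_2\circ f = g\circ g_1\circ f = g\circ I = g$, exactly the identity required of a second type decoding function, so $g_2$ is a valid D2. Note that this construction uses only $g_1$ and $g$, so the E1 hypothesis alone already forces a consistent D2 to exist; the separately assumed E2 property is then automatically satisfied by $g_2 = g\circ g_1$.

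I expect the main obstacle to be the well-definedness of $g_1$: the crux is to argue that the E1 hypothesis genuinely delivers a left inverse of $f$ (equivalently, injectivity of $f$ on $S$) rather than only a right inverse, since it is the left inverse that both recovers the plaintext and makes $g_2 = g\circ g_1$ typecheck. This forces one to fix the composition order in Definitions 1 and 2 consistently; once that reading is pinned down and injectivity is in hand, the remaining verifications are routine bookkeeping.
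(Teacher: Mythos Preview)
Your proposal is correct, but it does considerably more than the paper's own argument. The paper's proof is a one-line unpacking of the definitions: since $f\in E1$, Definition~1 already supplies some $g_1$ with the required composition identity, and since $f\in E2$, Definition~2 supplies some $g_2$; these are, by the same definitions, a D1 and a D2 respectively. No construction, no injectivity, no interaction between the two hypotheses is invoked.

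Your route is genuinely different and more informative. By extracting injectivity of $f$ from the E1 hypothesis and building $g_1$ explicitly as a left inverse, you then \emph{manufacture} a D2 via $g_2 = g\circ g_1$ rather than merely citing its existence from Definition~2. This buys two things the paper's proof does not: an explicit, interpretable formula for the second decoder (``decrypt, then read the label''), and the observation that the E2 hypothesis is redundant once E1 holds. You also flag the composition-order ambiguity in the definitions and commit to the semantically correct reading $g_1\circ f = I$; the paper's text writes the composition the other way but clearly intends your reading, and its proof does not address the point at all. In short, the paper treats the theorem as essentially definitional, while your argument is constructive and extracts a stronger conclusion.
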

\begin{proof}
  First, it has $f\in E1,E2$. Hence, it exists $f\circ g_1=I,f_i\circ g_2=g $. Therefore, $g_1\in D1,g_2\in D2$.
\end{proof}
Under theorem \ref{theorem1}, for pattern recognition problem, there is now 3 propositions. Proposition P: Deep learning model is used as decoder. Proposition Q: The target classification label is used as the supervision output. Proposition R: Original dataset is used as supervised output. It is easy to see that objects with $P\wedge Q=1$ need to satisfy functions $f$ and $g_1$, and objects with $P\wedge R=1$ need to satisfy functions $g_2$. (\emph{e.g.}, in our distributed machine learning system, in the view of computers with computing resources, it can be classified only when it satisfies the conjunction expression $P\wedge Q=1$, and in the view of dataset publisher, it can be encrypted only when it satisfies the conjunction expression $P\wedge R=1$.)

The above mathematical derivation provides a mathematical proof for our distributed machine learning system. The publisher of the dataset needs to have the encryption function $f$ and the first type of decoding function $g_1$. The computer with computing resources need to have the second type of decoding function $g_2$. Under our theorem, pattern recognition can continue under the data is encrypted. Theorem \ref{theorem1} is an extension of asymmetric encryption and the basic form of Tripartite Asymmetric Encryption (TAE). The following shows how TAE in distributed machine learning system can be implemented by eliminating constraints.

We use the analysis method to find the key of the problem. The constraint of the data hiding problem is that the second kind of decryptor must identify the encrypted data with high accuracy. In order to eliminate the constraint, we assume the second type of decryptor is an unconstrained optimization model, and this unconstrained optimization model as the second type of decryptor can be able to better recognize the encrypted dataset in all distribution.

We assume that the second type of decryptor is a neural network model. Neural networks can theoretically fit arbitrary nonlinear functions. In order to prove that the neural network model can recognize the encrypted dataset better, we define the concept of Function Compound Closure(FCC).
\begin{definition}\label{defination3}
  Given countable functions set $G_0=\left\{g_0,g_1,\cdots \right\}$. Then can construct a new set $H$ as follows:

  1. For all $g_i\in G_0 (i\in \left\{0,1,\cdots \right\}),g_i\in H$.

  2. For any $h_i,h_j\in H$, if $h_i,h_j$ can be compound, and there exists inverse operation of compound operation, such that $h_i\circ h_j\in H$, $(h_i\circ h_j )^{-1}\in H$.

  Then set $H$ is called the closure of set $G_0$ under the function composition(FCC).
\end{definition}
Suppose the neural network contains convolution, full connection, pooling and activation function operation structure \cite{dl}, then the neural network is a function composition. However, we expect the neural network model is a closure under the function composition (FCC), so we can explain that there exists a neural network that can decode a dataset which is mapped to a higher dimension by neural network according to the fact that there exists an inverse operation of compound operation in the definition \ref{defination3}. Its mathematical expression is as follows.
\begin{theorem}\label{theorem2}
  Suppose the dataset $S$ can be classified correctly, and there exists a neural network $D$ that can decode the data encoded by neural network $E$, then there exists a neural network mapping the $S'$ back to the low-dimensional space $S$, where $E$ maps the dataset $S$ to the high-dimensional space $S'$.
\end{theorem}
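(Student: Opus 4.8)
The plan is to derive the conclusion directly from the Function Compound Closure structure of Definition \ref{defination3}, treating the encoder $E$ as an element of the closure $H$ and then invoking the inverse clause of that definition. The whole argument is meant to be a structural consequence of FCC rather than an analytic construction, so the work is in lining up the hypotheses with the two clauses of that definition.

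First I would argue that $E \in H$. By the structural assumption stated just before the theorem, the encoder $E$ is built from the primitive operations --- convolution, full connection, pooling, and activation --- so each primitive lies in the generating set $G_0 \subseteq H$. Writing $E = e_1 \circ e_2 \circ \cdots \circ e_k$ with each $e_\ell \in G_0$ and applying clause~2 of Definition \ref{defination3} repeatedly (closure under composition), I obtain $E \in H$.

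Next I would apply the inverse clause of the same definition. Since $E \in H$ is a composite of composable primitives, Definition \ref{defination3} guarantees $E^{-1} \in H$. Every element of $H$ is by construction realizable as a composition of neural-network primitives, so $E^{-1}$ is itself a neural network. It then remains only to check that this network realizes the desired mapping: because $E$ sends $S$ into the higher-dimensional $S' = E(S)$, we have $E^{-1}(S') = E^{-1}(E(S)) = S$, so $E^{-1}$ maps $S'$ back to the low-dimensional space $S$. Identifying $E^{-1}$ with the decoder $D$ (they agree on $S'$ by the hypothesis that $D$ decodes the data encoded by $E$) completes the existence claim.

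The hard part will be justifying the inverse clause honestly, since pooling and activation layers are generally not globally invertible and thus discard information. The hypothesis that $E$ raises dimension is exactly what rescues the argument: an encoder that embeds $S$ injectively into a higher-dimensional $S'$ admits a left inverse on its image $S' = E(S)$ even when the individual primitives are not two-sided invertible. I would therefore be careful to interpret the FCC inverse as a left inverse restricted to the range $S'$, rather than a global inverse on the whole ambient space, and I would make explicit that it is the dimension increase --- not an unqualified appeal to closure --- that supplies the needed injectivity.
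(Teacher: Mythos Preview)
Your argument is valid for the theorem as literally stated, but it takes a different route from the paper and omits a component the paper's proof actually carries.

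\textbf{Different mechanism.} You obtain the decoder by invoking the inverse clause of Definition~\ref{defination3} on the whole encoder: $E\in H$ by closure under composition, hence $E^{-1}\in H$, and you then interpret $E^{-1}$ as a left inverse on the image $S'$. The paper does not appeal to the inverse clause at all. Instead it observes that the generating set $G_0$ already contains a dimension-reducing primitive $k$ (e.g.\ a fully-connected layer), and from that primitive asserts the existence of a network $N_0$ that takes $S'$ back down to $S$. Your route is cleaner logically and you are right to flag that the inverse in the definition must be read as a left inverse on $E(S)$; the paper's route sidesteps that issue by building the decoder out of forward primitives rather than by inverting $E$.

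\textbf{Unused hypothesis.} You never use the assumption that $S$ can be classified correctly, and indeed your argument does not need it to reach the stated conclusion. The paper, however, continues past the existence of $N_0$: from $N_0$ it infers that $S'$ retains all the feature information of $S$, and then uses the classifiability of $S$ to conclude that some neural network $N$ classifies $S'$. That classification step is the payoff the paper actually wants from Theorem~\ref{theorem2} (see the sentence immediately following the proof), even though the theorem statement only names the $S'\to S$ map. If you want your write-up to match the paper's scope, add one line: compose your left inverse $E^{-1}$ with the assumed classifier for $S$ to obtain a classifier for $S'$.
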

\begin{proof}
  Let Neural Network FCC
  $$G_0=\left\{conv_0,linear_0,pool_0,conv_1,linear_1,relu\cdots \right\}$$

  Let a dataset $$S=\left\{X_0,X_1,\cdots ,X_n \right\}$$

  here $X_j,0\le j\le n$ has $i$ attributes. 
  It exists a function compound operation $E$ from $G_0$, which makes dataset $S$ be mapped to high-dimensional space into dataset $S'$. Hence, $G_0$  is FCC. It exists $k\in G_0$ can reduce the data dimension(\emph{e.g.}, fully-connection).  It will exist neural network $N_0$, which can reduce $S'$ to the original low dimension space and restore it to the dataset $S$. Therefore, $S'$ holds all the feature information about $S$. For $S$ can be classified correctly, 
  it will have neural network $N$ which can classify $S'$.

\end{proof}
Theorem \ref{theorem2} demonstrates that there exists a neural network that can decode the encrypted data to its original state or its correct classification, where the encrypted data is encoded by a neural network encoder that maps the data to a higher dimension. Therefore, we provide a complete mathematical proof for the use of neural network as an unconstrained optimization tool to solve data hiding problems, which is also the theoretical basis of our encrypt algorithm  implementation(ArchNet). We propose measurement standards to evaluate the performance of an encryption model. We expect the encryption method is difficult to be cracked by malicious users, and it is easy for a deep learning model to recognize the encrypted data pattern.
\subsection{Difficulty to Steal}
For data hiding, we expect the owner of the encrypted dataset can not obtain the original dataset without the first type of decryption function. We have proved that in order to obtain the first type of encryption function, the proposition R in section 3.2 must be satisfied. But for the owner of encrypted dataset, proposition R is not satisfied. It can not steal the original dataset theoretically. Therefore, the dataset encrypted by our policy is difficult to steal.
\subsection{Computability}
The data encryption method should make computer recognize patterns better through the deep learning model. The encryption algorithm with high computability should ensure the accuracy of dataset $P_0$ and dataset $P_1$ is approximately equal on the validation set when the number of training epoch is same.
\begin{equation*}
  Acc_{epoch}(P_0,f)\approx Acc_{epoch}(P_1,f)
\end{equation*}

here $P_0$ denotes the original dataset and $P_1$ denotes the encrypted dataset. 
Choosing the validation accuracy to measure the computability of encryption method largely depends 
on the quality of the base model. In order to remove the influence of the base model, we propose an 
indicator $EC$. Suppose we solve a image classification problem $P_0$, $E_0$ encryption method acts 
on the original dataset. The computability of encryptor $E_0$ with classifier $f_0$ is expressed as Equ. \ref{equa33}.
\begin{equation}\label{equa33}
  EC_e(P_0,E_0,f_0)=\frac{Acc_e (P_0,f_0)-Acc_e(E_0(P_0),f_0 )}{Acc_e(P_0,f_0)}
\end{equation}

here $EC$ denotes easy to calculation, and $e$ denotes the number of training epoch. During the same epochs, the higher the $EC$ is, the worse the encryption method computability is. The encrypted data is difficult to recognize for deep learning model. The lower the $EC$ is, the better the encryption method is. The encrypted data is easy to recognize for deep learning model.
\section{IMPLEMENTATION OF ARCHNET}
Fig. \ref{highlevel} presents a high level overview of our approach. We describe the key components and the motivating examples of ArchNet in detail below.
\subsection{Overview of ArchNet}
\begin{figure}[h]
  \centering
  \includegraphics[width=\linewidth]{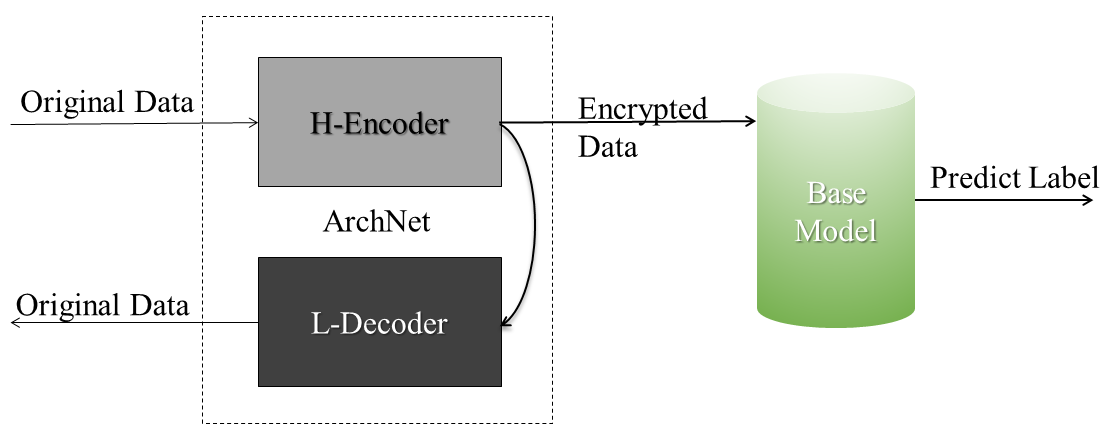}
  \caption{The high level overview of the ArchNet.}
  \label{highlevel}
\end{figure}
According to Theorem \ref{theorem2}, the premise of using neural network as decoder is that it can map data to high-dimensional space. Image is a tensor and it has three dimensions: channel, height and width. The channel is displayed in pixel color, and the length and width are displayed in pixel location. The target of encryption method is to prevent people from recognizing the encrypted data. We propose a method to map the original data to high-dimensional space. It is different from the denoising self-encoder \cite{noising} to reduce the data dimension. People can recognize data less than 3 dimensions, but can not recognize data over 3 dimensions.
According to Theorem \ref{theorem1}, in the process of training encryptor and decryptor, we keep the combination of them to produce unit mapping as shown in Fig. \ref{highlevel}. Therefore, the input dataset and the target dataset of the training model are the same. When the training is over, the input dataset and the target output dataset are split from the middle high-dimensional data. The first model is the first type of encoding function, and the second mode is the first type of decoding function. The dataset publisher can encrypt the training data by the first model to obtain the encrypted dataset. The dataset user does not need to have any part of the first type of functions. The image shape of the model layered output is shown in Fig. \ref{architec}, which is similar to an arch. We call this encryption and decryption method ArchNet. The first kind of encoding function which maps the original dataset into high-dimensional space is called H-encoder. The first kind of decoding function which restores the encrypted dataset from high-dimensional space to the original dataset is called L-decoder.
\begin{figure}[h]
  \centering
  \includegraphics[width=\linewidth]{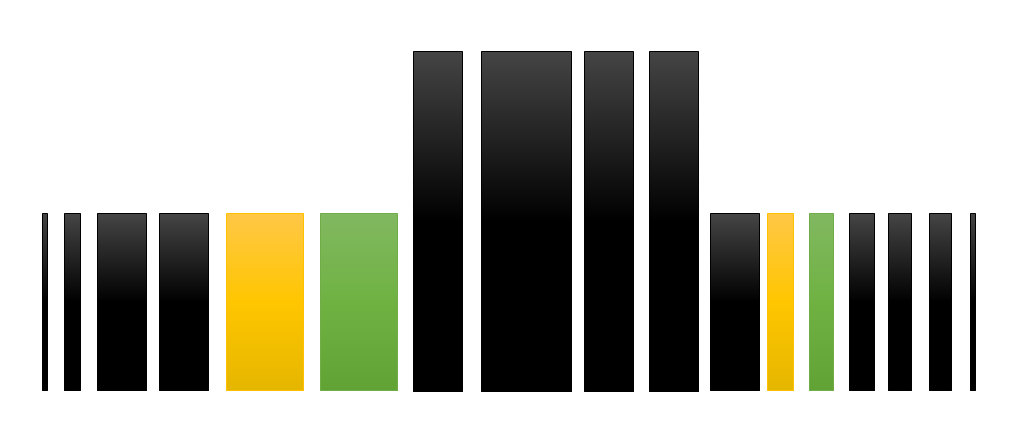}
  \caption{The concrete architecture of the ArchNet. Yellow presents the fully-connected layer. Blue presents the Relu activation function. Black presents the convolutionion layer.}
  \label{architec}
\end{figure}
\subsection{H-encoder}
H-encoder consists of several basic modules of neural network. It can not be training separately without L-decoder. In order to make H-encoder keep the original distribution of data, we primarily focus on convolution layer. In order to make the model more difficult to steal when encoder is applied to simple dataset, fully-connected layer and activation function (\emph{i.e.}, ReLU, Softmax, Tanh etc.) need to be added to H-encoder. Because pooling layer will lose data information, we do not recommend adding pooling layer to it. It is not suitable to use pure convolution layer, because convolution layer is more regular and can not hide the data. The data in the middle layer can also be recognized by human beings if only use convolution layer as shown in Fig. \ref{pureconv}. The output of H-encoder is high-dimensional data. In order to expand the dimension of data, we add transpose convolution layer at the end of H-encoder.
\subsection{L-decoder}
L-decoder is the implementation of first type of decoding function. 
The high-dimensional output of H-encoder is the input of L-decoder, whose goal is to remap the high-dimensional output to the original dataset. Convolution layers are included in L-decoder. Convolution layer can retain the local characteristics of data. For simple datasets, L-decoder includes fully-connected layer and activation function. In principle, 
L-decoder and H-encoder are symmetrical in structure. The purpose of unit mapping can be achieved 
by combining both of them. The difference between the two is that L-decoder does not have transposed convolution layer. 
We demonstrate of neural network design schemes from the dataset structure. For example: For the simple dataset (\emph{i.e.}, MNIST), 
fully- connected layer and activation function are added in the neural network to increase the complexity of the encrypted dataset. For the complex dataset (\emph{i.e.}, Cifar-10), only convolution layer is used in the neural network to increase the  computability of encrypted dataset.
\subsection{Training Strategy}
ArchNet can be used to encrypt a variety of data under different tasks. This paper focuses on data encryption in image classification. 
The quality of the base model is significant when training the encrypted dataset. Considering the learning strategy of ArchNet, suppose 
there is a training set $(X,Y)$, $X$ is an image set, $y$ is a label set corresponding to the image set. The training task of ArchNet 
includes parameter function $f(x,\theta )=x$. The goal is to obtain the parameter $\widehat{\theta}  $ such that $$\widehat{\theta }=argmin_{\theta } \sum_{x\in X}L(x,f(x,\theta ))$$where $L(x,f(x,\theta ))$ defines the loss between the output of ArchNet and the real output $X$. We use the binary cross entropy as the loss function, then the loss function is defined as:
\begin{equation}
  -\frac{1}{n}\sum_{i=1}^n [x\cdot \log \left(f(x,\theta )\right)+(1-x)\cdot \log \left(1-f(x,\theta )\right)]
\end{equation}
Here $n$ denotes the number of elements in the dataset. ArchNet achieve better results by using uniform distribution to initialize parameters. The neural network with gradient back propagation outperforms other evolutionary optimization strategies in training convergence time.
\subsection{Universality}
ArchNet is the implementation of TAE, which is an encryption algorithm in deep learning system. ArchNet can be expressed as $f(g(x))=x$, where $f$ is the first decoding function and $g$ is the first encoding function. In the general strategy, $h(g(x))=l$, where $l$ denotes the label of $x$ and $h$ denotes the second type of decoding function. ArchNet supports a diversity of base deep learning algorithms in distributed machine learning system. It is different from FHE, there are many choices of function $h$. Compared with the traditional general encryption algorithms such as RC4 \cite{rc4}, the ArchNet scheme is more stable. It adapts to a variety of base models while maintaining a higher accuracy. The accuracy difference between the ArchNet scheme and the original model is less than 1\%, which makes the $h(g(x))=l$ equation more consistent.
\subsection{Data Preprocessing}
Data quantity, data quality and data distribution as three representations of data affect the effect of distributed machine learning from different perspectives. In distributed machine learning system, the amount of data affects the efficiency of internet transmission. If the data quality is miserable, the accuracy of the base model is relatively low, and the accuracy of the encrypted data training is not high. The impact of data quality on ArchNet is significant. As the first type of encryption and decryption function, the encrypted data still has similar structural characteristics with the original data. Therefore, the data quality determines the data distribution. The data distribution affects the encrypted data distribution after ArchNet encryption. To get a better pattern recognition accuracy in distributed machine learning, we need to remove the noise in the data as much as possible to ensure that the original data can perform well in the base model.
\subsection{Time Delay}
In the distributed machine learning system, time delay is inevitable, especially on embedded devices. If time delay is defined as $t_0$, then $t_0=t_1+4t_2+t_3+t_4$, where $t_1$ denotes the delay of training ArchNet scheme, $t_2$ denotes the delay of network data transmission, 
which is divided into four parts: data publisher to machine learning server, machine learning server to data user (\emph{i.e.}, computers with computing source) to machine learning server, machine learning server to data publisher. Since the four parts are all network delays, we approximately equivalent them to the equivalent delay $t_2$. $t_3$ denotes the time of computing on the embedded devices, and $t_4$ denotes the time of data staying in the machine learning server and waiting for allocation. From the actual situation, the delay of $t_2$  is the smallest, $t_3$ is different because of the difference in training devices. In general, $t_3$ delay is the largest. The generation of $t_1$ is determined by the size of dataset and its batch size. The performace of embedded devices are not the first aspect of the time delay.
\subsection{An illustrative Example}
\begin{figure}[h]
  \centering
  \includegraphics[width=\linewidth]{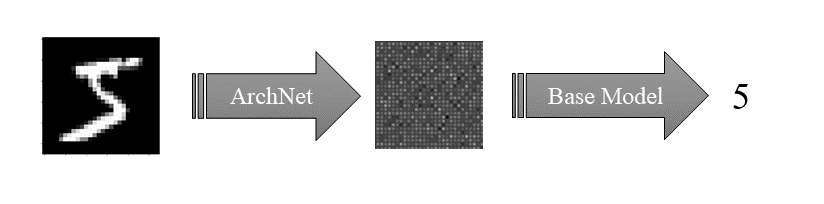}
  \caption{The processing of the MNIST dataset.}
  \label{process}
\end{figure}
As shown in Fig. \ref{process}, we take MNIST dataset as an example to demonstrate the data flow in distributed machine learning system. MNIST dataset is a typical image classification dataset. Now suppose the dataset publisher wants to find a model to recognize handwritten numbers, but it is not like using its own computer. It is not sure that its algorithm is better. So it uses MNIST dataset to train ArchNet for encryption, then encrypts the MNIST dataset and sends it to machine learning service provider. Machine learning service provider receives the encrypted dataset and looks for a device that wants to receive computing resources. The device receives the encrypted MNIST dataset and begins to use its own design of deep learning algorithm training model. After the training, the trained model will be sent back to the service provider, and the service provider will validate the accuracy and then pass it to the dataset publisher.
\section{EXPERIMENTAL EVALUATION}
In this section, we discuss our implementation and how we fine-tune ArchNet to achieve optimal performance.

All our measurements are performed on a system running Ubuntu 16.9 with NVIDIA GTX 2080 Ti GPU. Dataset generation is implemented on Intel (R) core (TM) i7-8700 CPU @ 3.20GHz.

\subsection{Feasibility to Embedded System}
As shown in Fig. \ref{framework},  our system can be divided into two stages. (i) Data set encryption on PC. (ii) Pattern recognition on embedded devices. Therefore, we can divide it into two training processes on one computer as a simulation and measure their effects. We encrypt data set with ArchNet on the first stage to simulate the process on PC and measure the performance of base model to simulate embedded devices.

\begin{figure}[h]
  \centering
  \includegraphics[width=\linewidth]{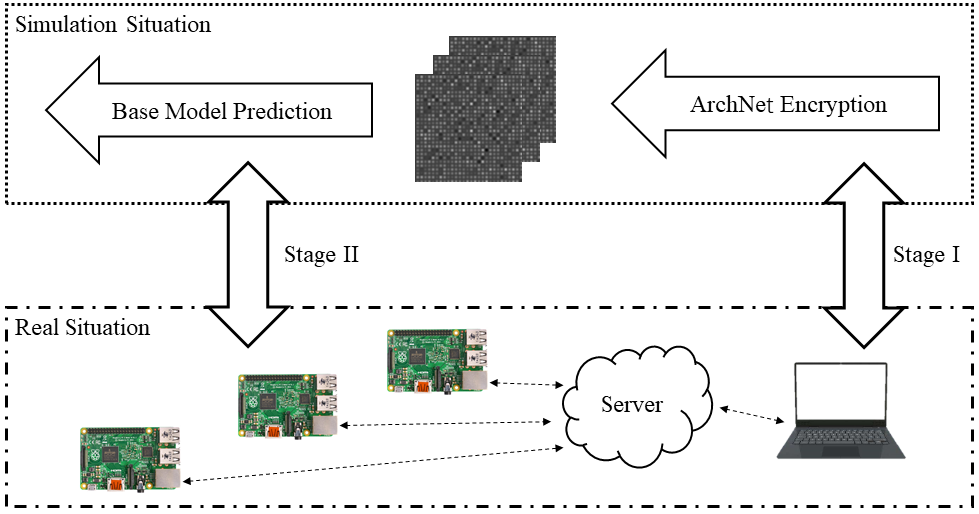}
  \caption{The framework of our evaluation.}
  \label{framework}
\end{figure}

\subsection{Model Architecture}
Our ArchNet scheme is implemented on MNIST, Fashion-MNIST and Cifar-10 datasets with pytorch-1.3.0. H-encoder consists of four convolution layers, one transposed convolution layer and one fully-connected layer. The fully-connected layer uses ReLU as its activation function. This function intends to increase the nonlinearity of the encoder. The tensor size is doubled in the transposed convolution layer. L-decoder consists of 8 convolution layers, two of which use ReLU as activation function. The ArchNet scheme was trained 10 epochs (\emph{i.e.}, 10 complete passes of the dataset) to achieve a high accuracy (\emph{e.g.}, the average accuracy in MNIST dataset is above 99.9\%). The training time for the MNIST dataset using ArchNet is less than 10 minutes, and that of Cifar-10 dataset is less than 15 minutes. The layer with the largest parameter quantity of ArchNet scheme is fully-connected layer. The number of parameter in ArchNet is $122,949,880$. Because Cifar-10 is a complex dataset, it does not need a full connection layer to increase the difficulty of stealing. Table 1 shows the ArchNet architecture in detail.

\begin{table*}[]
  \caption{Evaluation on ArchNet and RC4}
  \begin{center}
    \begin{tabular}{c|ccc|ccc}
      \hline
      Method   & \multicolumn{3}{c}{ArchNet} & \multicolumn{3}{c}{RC4}                                         \\ \hline
      Dataset  & AO                          & AE                      & EC      & AO      & AE      & EC      \\ \hline
      F-MNIST  & 82.31\%                     & 84.15\%                 & -2.23\% & 82.31\% & 10.60\% & 87.49\% \\
      MNIST    & 97.31\%                     & 97.26\%                 & 0.05\%  & 97.31\% & 12.65\% & 87.00\% \\
      Cifar-10 & 80.22\%                     & 79.80\%                 & 0.52\%  & 80.22\% & 10.65\% & 86.72\% \\ \hline
    \end{tabular}
  \end{center}
  \label{indextable}
\end{table*}

\subsection{Training Data}
For each validation dataset, we construct H-encoder and L-decoder and train them for 50 epochs. As Fig. \ref{performance}, the results shows that their accuracy are more than 99\%. For MNIST and F-MNIST datasets, the training samples to validation samples ratio is 6:1. For Cifar-10 datasets, the training samples to validation samples ratio is 5:1. We get H-encoder at the end of ArchNet. And use H-encoder to encrypt dataset to get encrypted dataset. We use encrypted dataset to train the selected base model to train the final model. The accuracy of the final training model is verified by encryption validation set, and the result is obtained by $EC$ value.
\begin{figure}
  \includegraphics[width=\linewidth]{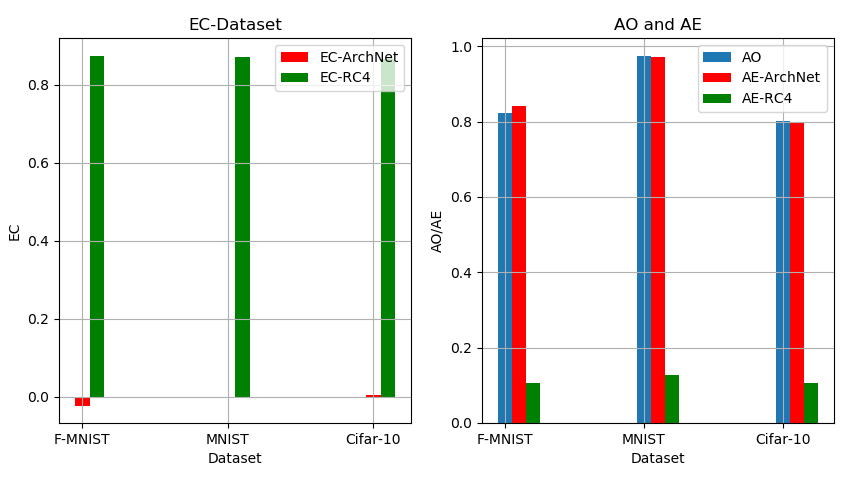}
  \caption{$EC$, $AE$ values of RC4 and ArchNet compared with $AO$ on F-MNIST, MNIST, Cifar-10.}
  \label{performance}
\end{figure}
\subsection{Training Strategy}
For ArchNet, we use Adam as the optimization algorithm. The initial learning rate is $10^{-5}$. 
We choose the mean square error function as the loss function which can directly show the difference
 between the output image and the real image. In the initialization of neural network parameters, 
 we use the uniform initialization method. We use Resnet151 \cite{resnet} as the base model of 
 Cifar-10 and convolutional neural network as the base model of MNIST and F-MNIST. The two basic 
 models can better evaluate the effect of ArchNet. The base models are trained on MNIST dataset, 
 Cifar-10 and F-MNIST dataset for 100 epochs. For the comparative experiment DP, we use tensorflow-privacy package with SGD optimal strategy.
\subsection{Evaluation Results}
The accuracy rate and relevant information of $EC$ of validation dataset used in our experiment
 are shown in table \ref{indextable}. Where, $AO$ denotes the accuracy of the original dataset, 
 $AE$ denotes the accuracy of the encrypted dataset. $EC$ is defined by formula \ref{equa33}.
$AO$ value is related to base model and dataset. $AE$ value is primarily related to base model, 
but also related to encryption policy. $EC$ value is primarily related to encryption policy. Our 
encrypt method hardly affects the accuracy of pattern recognition as shown in Fig. \ref{accuracy1}.
\begin{figure*}[htbp]
  \centering
  \subfigure[pic1.]{
    \includegraphics[width=5.5cm]{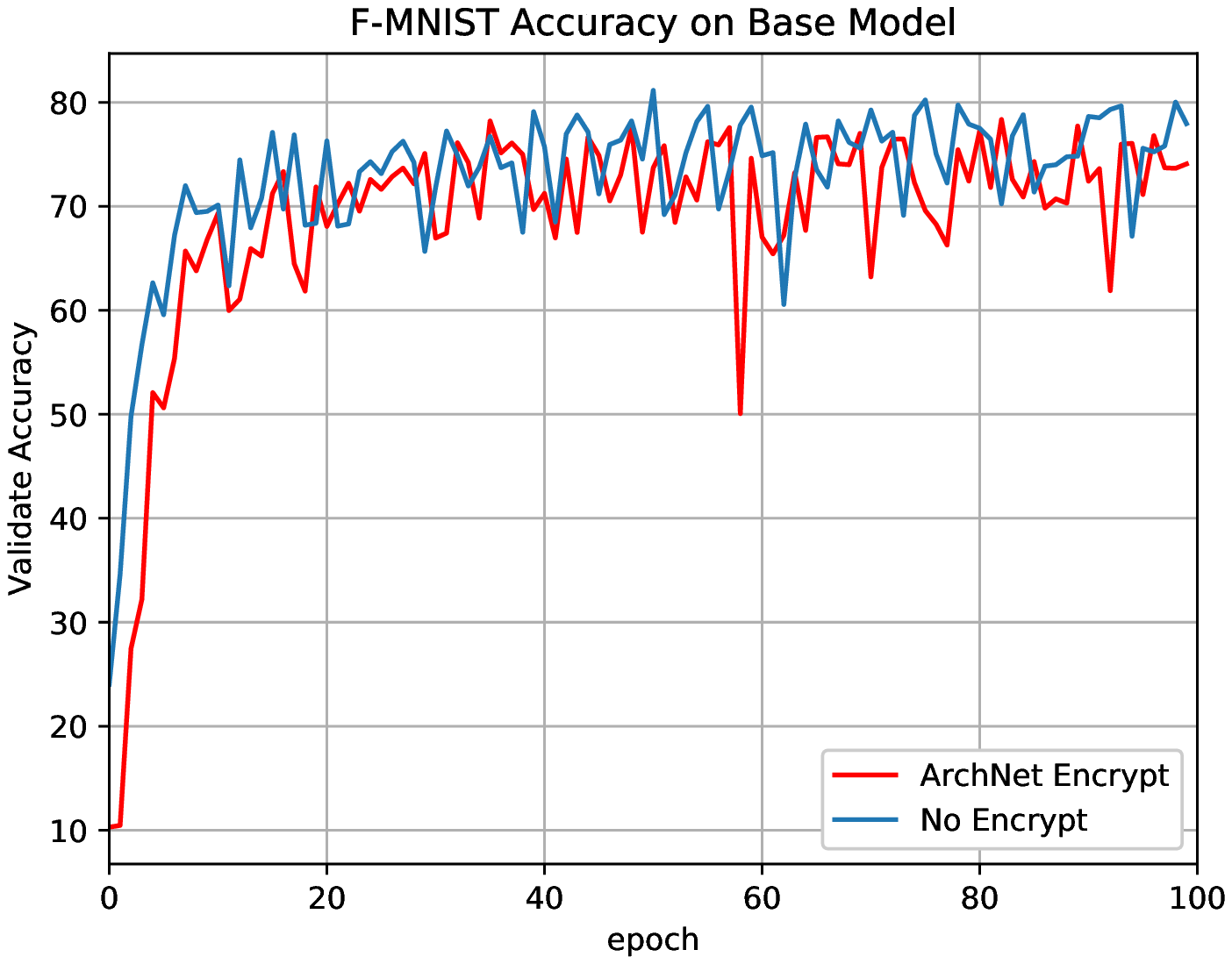}
  }
  \quad
  \subfigure[pic2.]{
    \includegraphics[width=5.5cm]{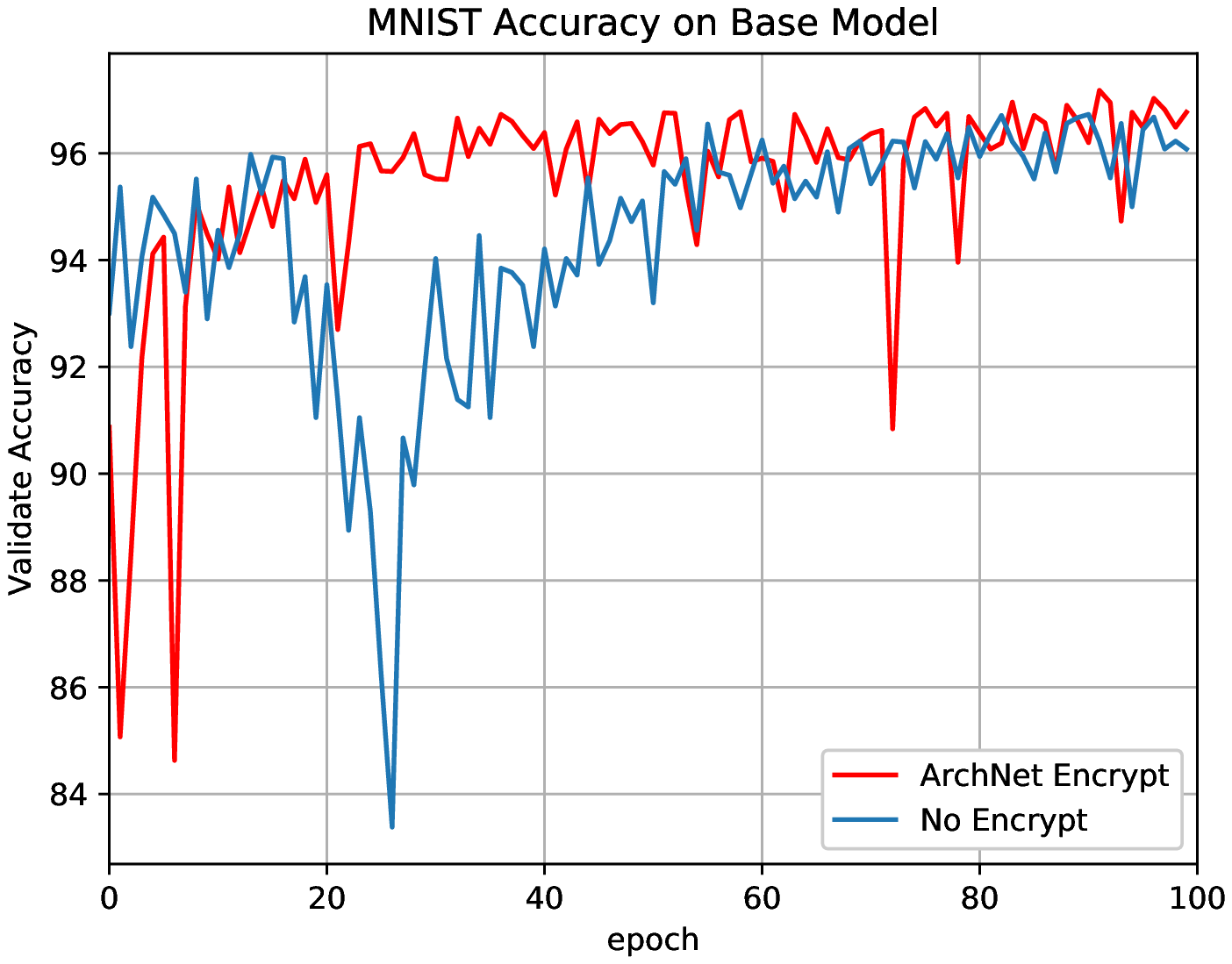}
  }
  \quad
  \subfigure[pic3.]{
    \includegraphics[width=5.5cm]{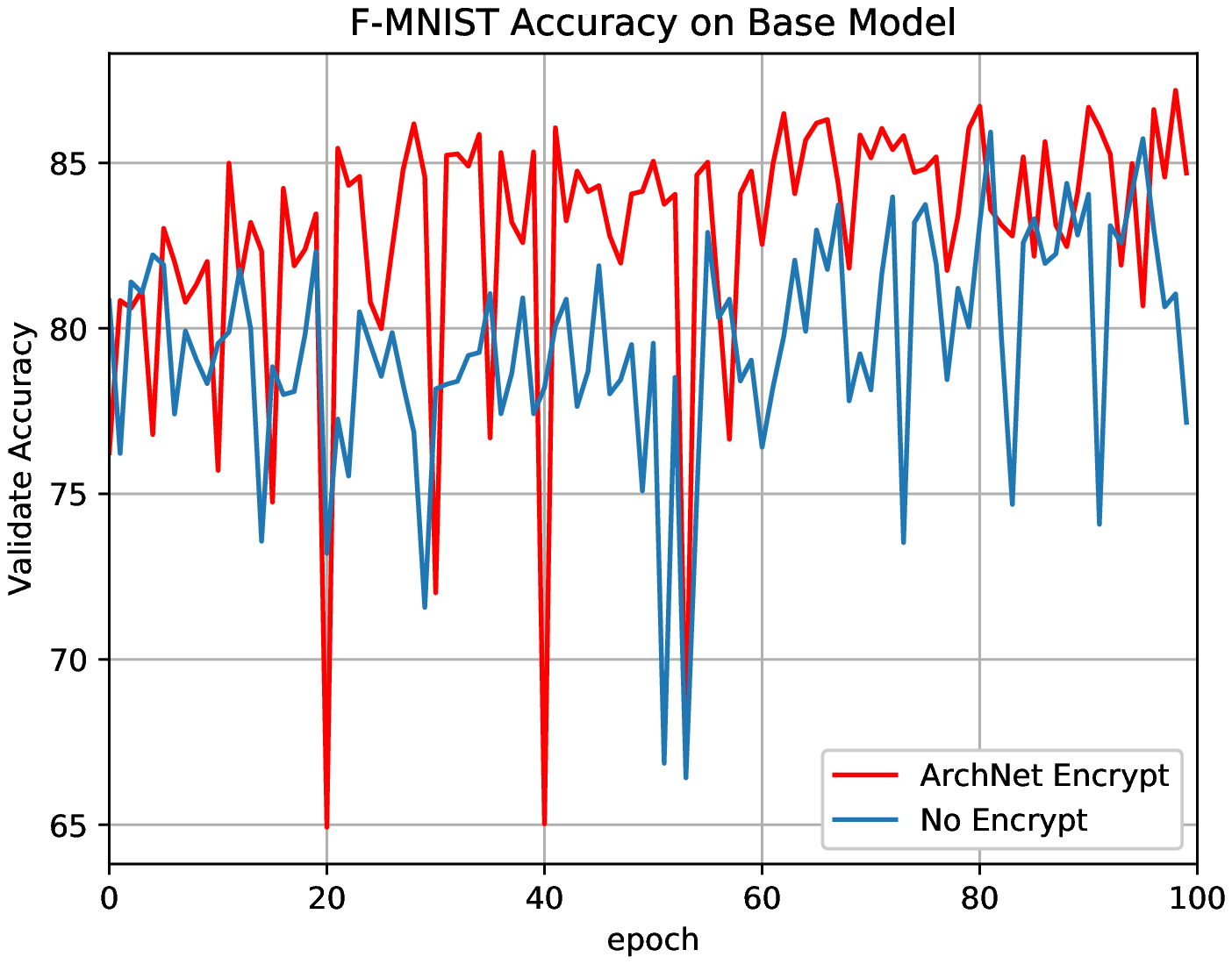}
  }
  \caption{The accuracy of the base model on different datasets. Subfigure (a) shows the training results on Fashion-MNIST dataset. Subfigure (b) shows the training results on MNIST dataset. Subfigure (c) shows the training results on Cifar-10 dataset.}
  \label{accuracy1}
\end{figure*}
\begin{table*}[]
  \caption{ArchNet structure on different Datasets}
  \begin{center}
    \begin{tabular}{@{}cccc@{}}
      \toprule
      Dataset                    & \begin{tabular}[c]{@{}c@{}}MNIST\\ 1x28x28\end{tabular} & \begin{tabular}[c]{@{}c@{}}F-MNIST\\ 1x28x28\end{tabular} & \begin{tabular}[c]{@{}c@{}}Cifar-10\\ 3x32x32\end{tabular} \\ \midrule

      \begin{tabular}[c]{@{}c@{}}ArchNet\\ (H-encoder)\end{tabular} & \begin{tabular}[c]{@{}c@{}}Conv2d(1,3)\\ Conv2d(3,10)\\ Conv2d(10,10)\\ Linear(10*28*28,20*28*28)\\ Relu()\\ Conv2d(20,20)\\ ConvTrans2d(20,10)\end{tabular} & \begin{tabular}[c]{@{}c@{}}Conv2d(1,3)\\ Conv2d(3,10)\\ Conv2d(10,10)\\ Linear(10*28*28,20*28*28)\\ Relu()\\ Conv2d(20,20)\\ ConvTrans2d(20,10)\end{tabular} & \begin{tabular}[c]{@{}c@{}}Conv2d(3,3)\\ Conv2d(3,10)\\ Conv2d(10,20)\\ Conv2d(20,20)\\ ConvTrans2d(20,10)\end{tabular} \\ \midrule

      \begin{tabular}[c]{@{}c@{}}ArchNet\\ (L-decoder)\end{tabular} & \begin{tabular}[c]{@{}c@{}}Conv2d(10,10)\\ Conv2d(10,30)\\ Conv2d(30,10)\\ Conv2d(10,10)\\ Conv2d(10,10)\\ Relu()\\ Conv2d(10,5)\\ Conv2d(5,3)\\ Relu()\\ Conv2d(3,1)\end{tabular} & \begin{tabular}[c]{@{}c@{}}Conv2d(10,10)\\ Conv2d(10,30)\\ Conv2d(30,10)\\ Conv2d(10,10)\\ Conv2d(10,10)\\ Relu()\\ Conv2d(10,5)\\ Conv2d(5,3)\\ Relu()\\ Conv2d(3,1)\end{tabular} & \begin{tabular}[c]{@{}c@{}}Conv2d(10,10)\\ Conv2d(10,30)\\ Conv2d(30,10)\\ Conv2d(10,10)\\ Conv2d(10,10)\\ Relu()\\ Conv2d(10,5)\\ Conv2d(5,3)\\ Relu()\\ Conv2d(3,3)\end{tabular} \\ \midrule

      parameter                  & 122,960,821                & 122,960,821                & 14,961                     \\ \bottomrule
    \end{tabular}
  \end{center}
\end{table*}

\subsection{Analysis on the Difficulty of Stealing}
Pure convolution ArchNet maps the data A to a high dimensional form 
B. We select three dimensions from B to visualize. Subfigure a in 
Fig. \ref{pureconv} is an image generated by passing the MNIST dataset 
through ArchNet with pure convolution layers. Under the regular operation 
of convolution layer, human can easily distinguish the characteristics of dataset. 
The reason is the receptive field of convolution is similar to that of 
human eyes. Convolution layer is only a two-dimensional linear processing of data, 
and does not break up the original distribution of data. Fig. \ref{pureconv} 
uses ArchNet with activation function and fully-connected layer for simple dataset. 
It can break up the original distribution that the human can perceive. We can not 
obtain any other useful information through the visualization of encrypted data. 
Therefore, it is difficult to steal the original dataset through the ArchNet encrypted 
dataset as shown in Fig. \ref{purecif}.
\begin{figure}[htbp]
  \centering
  \subfigure[pic1.]{
    \includegraphics[width=1.5cm]{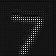}
  }
  \quad
  \subfigure[pic2.]{
    \includegraphics[width=1.5cm]{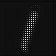}
  }
  \quad
  \subfigure[pic3.]{
    \includegraphics[width=1.5cm]{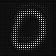}
  }
  \quad
  \subfigure[pic4.]{
    \includegraphics[width=1.5cm]{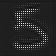}
  }
  \quad
  \subfigure[pic5.]{
    \includegraphics[width=1.5cm]{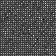}
  }
  \quad
  \subfigure[pic6.]{
    \includegraphics[width=1.5cm]{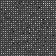}
  }
  \quad
  \subfigure[pic7.]{
    \includegraphics[width=1.5cm]{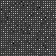}
  }
  \quad
  \subfigure[pic8.]{
    \includegraphics[width=1.5cm]{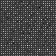}
  }
  \caption{Encryption on MNIST}
  \label{pureconv}
\end{figure}
\begin{figure}[htbp]
  \centering
  \subfigure[pic1.]{
    \includegraphics[width=1.5cm]{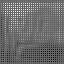}
  }
  \quad
  \subfigure[pic2.]{
    \includegraphics[width=1.5cm]{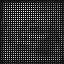}
  }
  \quad
  \subfigure[pic3.]{
    \includegraphics[width=1.5cm]{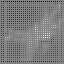}
  }
  \quad
  \subfigure[pic4.]{
    \includegraphics[width=1.5cm]{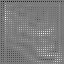}
  }
  \caption{Encryption on Cifar-10}
  \label{purecif}
\end{figure}
\begin{figure}[htbp]
  \centering
  \includegraphics[width=\linewidth]{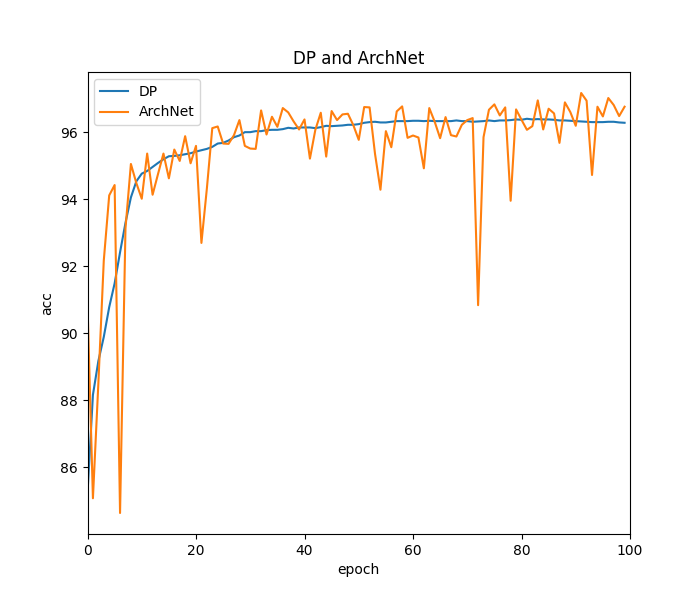}
  \caption{Accuracy compare between DP and ArchNet}
  \label{picl}
\end{figure}
\subsection{Operability Analysis}
The $EC$ value reflects the operability of the general encryption algorithm. In our experiment, the $EC$ value reflects the difference between the pattern recognition of the encrypted dataset and the pattern recognition of the original dataset in the case of this encryption method. We show the difference as follows, when the original dataset is MNIST, the dataset $EC$ value encrypted by ArchNet is 0.05\%, which is much smaller than the $EC$ value of 87.00\% encrypted by RC4 algorithm. Similar effects exist in different datasets. Our method is much better than general encryption algorithm in operability.

\subsection{Convergence Relation Analysis}

The convergence curve of the base model in the training process based on encrypted dataset is basically the same as that in the training process based on original dataset. It shows that the encrypted dataset of ArchNet is close to the original dataset in convergence relation. In the distributed machine learning system, this convergence relation proves that in the relation between $t_3$ and $t_1$ is very small, which is defined in section 4.7. Because the training curve of encrypted dataset is similar to that of original dataset, the computing end with computing resources can be optimized on the basis of the existing model without considering encryption methods. As shown in Fig. \ref{picl}, compared to the Difference Privacy policy with SGD, ArchNet is lack of stability but shows a small superiority on the accuracy.

\subsection{General Analysis}
The $EC$ values of the three datasets encrypted by ArchNet are less than 1\%. The $EC$ 
values of the three datasets encrypted by RC4 are around 87.00\%. The $EC$ value appears 
negative when validating the F-MNIST dataset, which make the samples enhanced after ArchNet 
maps the data to the high-dimensional space. The same base model is easier to classify the 
data in the high-dimensional space. If the same encryption method is applied to different 
datasets and the $EC$ value is similar, then the encryption method is independent of datasets. 
This encryption method has good universality. For the above ArchNet and RC4 algorithms, 
their $EC$ values are similar in different datasets. Therefore, the generality ArchNet is the same as RC4.

\section{RELATED WORKS}
\subsection{Distributed Machine Learning}
Distributed machine learning is a wide range of concepts, including multiple computing 
units of distributed learning model, big data distributed learning model, etc. Recently, 
there are researches on using distributed technology to improve the performance of traditional 
machine learning. Based on the concept of model sharing, a big data analysis system is 
introduced by Jie Jiang et al. \cite{JiangAngel} The high-dimensional big model is reasonably 
divided into multiple sub-model server nodes. Some researchers also apply the concept of 
distributed to specific scenarios, such as medical, legal and other fields \cite{court, HuangPatient, Hongmei}. 
However, the existing research of distributed machine learning system mainly focuses 
on the synchronization and data distribution of distributed machine learning \cite{Ho2013More,robust}. 
On the contrary, our distributed machine learning system focuses more on the innovation of business 
model, and uses ArchNet to solve the problem of data hiding in this business system.
\subsection{Neural Network Encryption}
The problem of neural network encryption is a hot topic. The theory of fully homomorphic encryption proposed by Gentry lays a foundation for encryption theory of complicated data \cite{Gentry2009A}. The CryptoNets neural network model proposed by Dowlin et al \cite{Xie2014Crypto}. It use FHE to realize deep learning of privacy protection. They provide a framework for designing neural networks that can run on encrypted data, and propose a polynomial approximation using the Relu activation function. CryptoNets and its derived neural network for solving encrypted data are gradually improved \cite{graph,JuvekarGazelle,ChouFaster}. Some researchers use encrypted neural network to solve the edge computing problem on IoT devices \cite{TianLEP}. However, the existing solution of neural network intends to design neural network with encrypted data. On the contrary, we research on how to encrypt data by certain methods. Therefore, the neural network without special processing can identify its pattern. It can increase the diversity of algorithm in the data computing end of distributed machine learning system.
\subsection{Model Stealing and Prevention}
In the application of machine learning with network, we need to solve the problem of model stealing. Model stealing refers to how to prevent data leakage when sensitive training data in machine learning model may leak personal privacy. Nicolas Papernot et al. proposes PATE that can differentiate the privacy data into different models through the strategies of students' and teachers' models to prevent model stealing \cite{PapernotScalable}. Yunhui Long et al. improve PATE method by GAN \cite{YunhuiScalable}. Some researches also put forward aggressive schemes from statistical machine learning model to deep learning model stealing and corresponding countermeasures \cite{KesarwaniModel,JuutiPRADA,apidl,article}. However, the existing method is to prevent the server from stealing the client's data in the machine learning of cloud computing, while our system is a machine learning system adopted in the distributed scenario, using two types of keys to lock the dataset. Our method makes the second type of key more flexible.
\section{CONCLUSIONS}
We considered the basic form of distributed machine learning system with embedded devices to address the limited hardware on cloud server. 
We made efforts to design a data hiding framework, ArchNet, to deal with the data stealing problem in distributed machine learning systems. 
Our experiment can well prove the correctness of the data hiding principle proposed in this paper. Compared 
with the traditional encryption algorithm, the model based on neural network can well complete the encryption and decryption 
tasks in the distributed machine learning system from the aspects of difficulty to steal and operability. ArchNet is useful in the emerging machine learning applications, such as 3D picture 
transmission and remote model extraction on embedded system.

\bibliographystyle{plainnat}
\bibliography{thispaper}

\end{document}